\definecolor{brightmaroon}{rgb}{0.76, 0.13, 0.28}
\setlist[itemize]{leftmargin=*}
\newcommand{\BEAS}{\begin{eqnarray*}}
\newcommand{\EEAS}{\end{eqnarray*}}
\newcommand{\BEA}{\begin{eqnarray}}
\newcommand{\EEA}{\end{eqnarray}}
\newcommand{\BEQ}{\begin{equation}}
\newcommand{\EEQ}{\end{equation}}
\newcommand{\BIT}{\begin{itemize}}
\newcommand{\EIT}{\end{itemize}}
\newcommand{\BNUM}{\begin{enumerate}}
\newcommand{\ENUM}{\end{enumerate}}
\newcommand{\BA}{\begin{array}}
\newcommand{\EA}{\end{array}}
\newcommand{\mysec}[1]{Section~\ref{sec:#1}}
\newcommand{\eq}[1]{Eq.~(\ref{eq:#1})}
\newcommand{\rb}{\mathbb{R}}
\newcommand{\lova}{Lov\'asz }
\newcommand{\E}{\mathbb{E}}
\newtheorem{proposition}{Proposition}
\let\OLDthebibliography\thebibliography
\renewcommand\thebibliography[1]{
  \OLDthebibliography{#1}
  \setlength{\parskip}{2.4pt}
  \setlength{\itemsep}{2pt plus 0.3ex}
}
\newcommand\eqdef{\stackrel{\small\mathclap{\normalfont\mbox{\small def}}}{=}}
\title{Parameter Learning
for Log-supermodular Distributions}
\author{
  Tatiana Shpakova and Francis Bach\\
  INRIA \\
  D\'epartement d'Informatique de l'\'{E}cole Normale Sup\'{e}rieure, Paris\\
  (CNRS - ENS - INRIA)
}
\begin{document}

\maketitle

\begin{abstract}
We consider log-supermodular models on binary variables, which are probabilistic models with negative log-densities which are submodular. These models provide probabilistic interpretations of common combinatorial optimization tasks such as image segmentation. In this paper, we focus primarily on parameter estimation in the models from  known upper-bounds on the intractable  log-partition function. We show that the bound based on separable optimization on the base polytope of the submodular function is always inferior to a bound based on ``perturb-and-MAP'' ideas. Then, to learn parameters, given that our approximation of the log-partition function is an expectation (over our own randomization), we use a stochastic subgradient technique to maximize a lower-bound on the log-likelihood. This can also be extended to conditional maximum likelihood. We illustrate our new results in a set of experiments in binary image denoising, where we highlight the flexibility of a probabilistic model to learn with missing data.

\end{abstract}

\section{Introduction}
\label{sec:Introduction}

Submodular functions provide efficient and flexible tools for learning on discrete data. Several common combinatorial optimization tasks, such as clustering, image segmentation, or document summarization, can be achieved by the minimization or the maximization of a submodular function~\cite{Bach13,Golovin11,Lin11}. The key benefit of submodularity is the ability to model notions of diminishing returns, and the availability of exact minimization algorithms and approximate maximization algorithms with precise approximation guarantees~\cite{krause14survey}.

In practice, it is not always straightforward to define an appropriate submodular function for a problem at hand. Given fully-labeled data, e.g., images and their foreground/background segmentations in image segmentation, structured-output prediction methods such as the structured-SVM may be used~\cite{szummer2008learning}. However, it is common (a) to have missing data, and (b) to embed submodular function minimization within a larger model. These are two situations well tackled by \emph{probabilistic modelling}.

Log-supermodular models, with negative log-densities equal to a submodular function,  are a first important step toward probabilistic modelling on discrete data with submodular functions~\cite{Djolonga14}. However, it is well known that the log-partition function is intractable in such models. Several bounds have been proposed, that are accompanied with variational approximate inference~\cite{Djolonga15}. These bounds are based on the submodularity of the negative log-densities. However, the parameter learning (typically by maximum likelihood), which is a key feature of probabilistic modeling, has not been tackled yet.  We make the following contributions:

\BIT
\item[--] In \mysec{var}, we review existing variational bounds for the log-partition function  and show that the bound of~\cite{Hazan12}, based on ``perturb-and-MAP'' ideas, formally dominates the bounds proposed by~\cite{Djolonga14,Djolonga15}.
\item[--] In \mysec{lfield}, we show that for   parameter learning via maximum likelihood the existing bound of~\cite{Djolonga14,Djolonga15} typically leads to a degenerate solution while the one based on ``perturb-and-MAP'' ideas and logistic samples~\cite{Hazan12} does not.
\item[--] In \mysec{sgd}, given that the bound based on ``perturb-and-MAP'' ideas is an expectation (over our own randomization), we propose to use a stochastic subgradient technique  to maximize the lower-bound on the log-likelihood, which can also be extended to conditional maximum likelihood.
\item[--] In \mysec{exp}, we illustrate our new results on a set of experiments in binary image denoising, where we highlight the flexibility of a probabilistic model for learning with missing data.
\EIT

\section{Submodular functions and log-supermodular models}
\label{sec:Log-supermodular model}

In this section, we review the relevant theory of submodular functions and recall typical examples of log-supermodular distributions.

\subsection{Submodular functions}

We consider submodular functions on the vertices of the hypercube $\{0,1\}^D$. This hypercube representation is equivalent to the power set of $\{1,\dots,D\}$. Indeed, we can go from a vertex of the hypercube to a set by looking at the indices of the components equal to one and from set to vertex by taking the indicator vector of the set.

For any two vertices of the hypercube, $x,y \in \{0,1\}^D$, a function $f:\{0,1\}^D \to \rb$ is submodular if $f(x) + f(y) \geqslant f(\min\{x,y\}) + f(\max\{x,y\})$, where the min and max operations are taken component-wise and correspond to the intersection and union of the associated sets. Equivalently, the function 
$x \mapsto f(x + e_i) - f(x)$, where $e_i \in \rb^D$ is the $i$-th canonical basis vector, is non-increasing. Hence, the notion of diminishing returns is  often associated with submodular functions. 
Most widely used submodular functions are cuts, concave functions of subset cardinality, mutual information, set covers, and certain functions of eigenvalues of submatrices~\cite{Bach13, Fujis05}. Supermodular functions are simply negatives of submodular functions.

In this paper, we are going to use a few properties of such submodular functions (see~\cite{Bach13,Fujis05} and references therein). Any submodular function $f$ can be extended from $\{0,1\}^D$ to a convex function on $\rb^D$, which is called the \lova extension. This extension has the same value on $\{0,1\}^D$, hence we use the same notation $f$. Moreover, this function is convex and piecewise linear, which implies the existence of a polytope $B(f) \subset \rb^D$, called the base polytope, such that for all $x \in \rb^D$, $f(x) = \max_{ s \in B(f)} x^\top s$, that is, $f$ is the support function of $B(f)$. The \lova extension $f$ and the  base polytope $B(f)$ have explicit expressions that are, however, not relevant to this paper. We will only use  the fact that $f$ can be efficiently minimized on $\{0,1\}^D$, by a variety of generic algorithms, or by more efficient dedicated ones for subclasses such as graph-cuts.

\subsection{Log-supermodular distributions}

Log-supermodular models are introduced in~\cite{Djolonga14} to model probability distributions on a hypercube, $x \in \{0,1\}^D$, and are defined as
\[ p(x) = \frac{1}{Z(f)} \exp(-f(x)) , \]
where $f:\{0,1\}^D~\to~\mathbb{R}$ is a submodular function such that $f(0) = 0$ and the partition function is $Z(f)~=~\sum_{x \in \{0,1\}^D}{\exp(-f(x))}$.
It is more convenient to deal with the convex log-partition function
$A(f) = \log{Z(f)} = \log{\sum_{x \in \{0,1\}^D}{\exp(-f(x))}}$.
In general, calculation of the partition function $Z(f)$ or the log-partition function $A(f)$ is intractable, as it includes simple binary Markov random fields---the exact calculation is known to be $\#P$-hard~\cite{Jerrum93}. In \mysec{var}, we review upper-bounds for the log-partition function.

\subsection{Examples}

Essentially, all submodular functions used in the minimization context can be used as negative log-densities~\cite{Djolonga14,Djolonga15}. In computer vision, the most common examples are graph-cuts, which are essentially binary Markov random fields with attractive potentials, but higher-order potentials have been considered as well~\cite{kohli2009robust}. In our experiments, we use graph-cuts, where submodular function minimization may be performed with max-flow techniques and is thus efficient~\cite{Boykov01}. 
Note that there are extensions of submodular functions to continuous domains that could be considered as well~\cite{bach2015submodular}.
 
\section{Upper-bounds on the log-partition function}
\label{sec:var}

In this section, we review the main existing upper-bounds on the log-partition function for log-supermodular densities. These upper-bounds use several properties of submodular functions, in particular, the \lova extension and the base polytope. Note that lower bounds based on submodular maximization aspects and superdifferentials~\cite{Djolonga14} can be used to highlight the tightness of various bounds, which we present in Figure~\ref{fig:cuts}.

\subsection{Base polytope relaxation with L-Field~\cite{Djolonga14}}

This method exploits the fact that any submodular function $f(x)$ can be lower bounded by a modular function $s(x)$, i.e., a linear function of $x \in \{0,1\}^D$ in the hypercube representation. The submodular function and its lower bound are related by $f(x) = \max_{ s \in B(f) } s^\top x$, leading to:
\[
A(f) = 
\log{\sum_{x \in \{0,1\}^D}{\exp{(-f(x))}}} = \log{\sum_{x \in \{0,1\}^D} \min_{s \in B(f) } {\exp{(-s^\top x)}}},
\]
which, by swapping the sum and min, is less than 
\BEQ
\label{eq:lfield}
\min_{s \in B(f) }   \log{ \sum_{x \in \{0,1\}^D}{\exp{(-s^\top x)}}}
= \min_{s \in B(f) } \sum_{d=1}^D {\log{(1+e^{-s_d})}} \ \  \eqdef \ \  A_{\rm L\mbox{-}field }(f).
\EEQ
Since the polytope $B(f)$ is tractable (through its membership oracle or by maximizing linear functions efficiently), the bound $A_{\rm L\mbox{-}field }(f)$ is tractable, i.e., computable in polynomial time. Moreover, it has a nice interpretation through the convex duality as the logistic function
$\log(1+e^{-s_d})$ may be represented as $\max_{ \mu_d \in [0,1] } -\mu_d s_d - \mu_d \log \mu_d - (1-\mu_d) \log (1-\mu_d) $, leading to:
\[
A_{\rm L\mbox{-}field }(f) = \min_{s \in B(f) } \max_{\mu \in [0,1]^D} -\mu^\top s + H(\mu)
=  \max_{\mu \in [0,1]^D} H(\mu) - f(\mu),
\]
where $H(\mu) = - \sum_{d=1}^D \big\{ \mu_d \log \mu_d + (1-\mu_d) \log (1-\mu_d) \big\}$. This shows in particular the convexity of $f \mapsto A_{\rm L\mbox{-}field }(f)$. Finally,~\cite{Djolonga15} shows the remarkable result that the minimizer $ s\in B(f)$ may be obtained by minimizing a simpler function on $B(f)$, namely the squared Euclidean norm, thus leading to algorithms such as the minimum-norm-point algorithm~\cite{Fujis05}.

\subsection{``Pertub-and-MAP'' with logistic distributions}

Estimating the log-partition function can be done through optimization using ``pertub-and-MAP'' ideas. The main idea is to perturb the log-density, find the maximum a-posteriori configuration (i.e., perform optimization), and then average over several random perturbations~\cite{Hazan12, Papandreou11, Tarlow12}. 

The Gumbel distribution  on $\rb$, whose cumulative distribution function is $F(z) = \exp(-\exp(-(z+c)))$, where $c$ is the Euler constant, is particularly useful. Indeed,  if  $\{g(y)\}_{y \in \{0,1\}^D}$ is a collection of independent random variables $g(y)$ indexed by $y \in \{0,1\}^D$, each following the Gumbel distribution, then the random variable $\max_{y \in \{0,1\}^D}{g(y) - f(y)}$ is such that we have $\log{Z}(f) = \E_g \big[ {\max_{y \in \{0,1\}^D}{\{ g(y) - f(y) \} }} \big]$ \cite[Lemma 1]{Hazan12}. The main problem is that we need $2^D$ such variables, and a key contribution of~\cite{Hazan12} is to show that if we consider a factored collection $\{g_d(y_d)\}_{y_d \in \{0,1\}, d = 1, \dots, D}$  of i.i.d.~Gumbel variables, then we get an upper-bound on the log partition-function, that is,
$\log{Z(f)} \leq \E_g{\max_{y \in \{0,1\}^D}{\{ \sum_{d=1}^D{g_d(y_d)} - f(y)} \} }$.

Writing $g_d(y_d) = [ g_d(1)  - g_d(0) ] y_d + g_d(0)$ and using the fact that (a) $g_d(0)$ has zero expectation and (b) the difference between two independent Gumbel distributions has a logistic distribution (with cumulative distribution function $z \mapsto
(1 + e^{-z})^{-1}$)~\cite{Nadarajah05}, we get the following upper-bound:
\BEQ
A_{\rm Logistic}(f) = \mathbb{E}_{z_1,\dots,z_D \sim \rm logistic} \big[\max\limits_{y \in \{0,1\}^D}{ \{ z^\top y - f(y) \} } \big],
\EEQ
where the random vector $z \in \rb^D$ consists of independent elements taken from the logistic distribution. This is always an upper-bound on $A(f)$ and it uses only the fact that submodular functions are efficient to optimize. It is convex in $f$ as an expectation of a maximum of  affine functions of $f$.

\subsection{Comparison of bounds}

In this section, we show that $A_{\rm L\mbox{-}field }(f)$ is always dominated by $A_{\rm Logistic}(f)$. This is complemented by another result  within the maximum likelihood framework in \mysec{ml}.

\begin{proposition}
For any submodular function $f: \{0,1\}^D \to \rb$, we have:
\BEQ A(f) \leqslant A_{\rm Logistic}(f) \leqslant A_{\rm L\mbox{-}field }(f)
.
\EEQ
\end{proposition}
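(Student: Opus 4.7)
The first inequality $A(f) \leq A_{\rm Logistic}(f)$ is already established in the preceding discussion by the Gumbel perturbation argument of \cite{Hazan12}, so the main content is the second inequality $A_{\rm Logistic}(f) \leq A_{\rm L\mbox{-}field}(f)$. My plan is to insert an arbitrary $s \in B(f)$ inside the expectation defining $A_{\rm Logistic}(f)$, perform the maximization over $y$ in closed form, compute the resulting one-dimensional expectations, and then minimize over $s \in B(f)$.

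Fix any $s \in B(f)$. Because $f$ is the support function of its base polytope, $s^\top y \leq f(y)$ for every $y \in \{0,1\}^D$. Substituting this lower bound on $f(y)$ inside the $\max$ and using monotonicity of expectation yields
\[
A_{\rm Logistic}(f) \;=\; \mathbb{E}_z\big[\max_{y \in \{0,1\}^D} \{z^\top y - f(y)\}\big] \;\leq\; \mathbb{E}_z\big[\max_{y \in \{0,1\}^D} (z-s)^\top y\big].
\]
Since the maximization now separates across coordinates and $y_d \in \{0,1\}$, it equals $\sum_{d=1}^D (z_d - s_d)_+$. Taking expectations coordinatewise gives $A_{\rm Logistic}(f) \leq \sum_{d=1}^D \mathbb{E}_{z_d}[(z_d - s_d)_+]$.

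The key calculation is the one-dimensional identity $\mathbb{E}[(z - s)_+] = \log(1 + e^{-s})$ for $z$ standard logistic. I would derive this by the tail-integral formula $\mathbb{E}[(z-s)_+] = \int_s^\infty \Pr(z > t)\,dt$ together with the logistic survival function $\Pr(z > t) = e^{-t}/(1+e^{-t})$, whose antiderivative is $-\log(1+e^{-t})$. Plugging this in yields $A_{\rm Logistic}(f) \leq \sum_{d=1}^D \log(1+e^{-s_d})$ for every $s \in B(f)$, so minimizing the right-hand side over $s \in B(f)$ gives exactly $A_{\rm L\mbox{-}field}(f)$ as defined in \eqref{eq:lfield}.

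There is no real obstacle here: the only non-trivial step is the logistic tail integral, which is elementary. The conceptual point worth highlighting in the proof is the duality between the two bounds --- $A_{\rm L\mbox{-}field}$ linearizes $f$ by an element of $B(f)$ \emph{before} summing/log-exp-ing, while $A_{\rm Logistic}$ keeps $f$ intact and exploits that efficient MAP inference over $f$ is available, and this extra flexibility is exactly what makes the logistic bound at least as tight.
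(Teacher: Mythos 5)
Your proof is correct and follows essentially the same route as the paper: lower-bound $f(y)$ by $s^\top y$ for $s \in B(f)$, maximize over $y$ coordinatewise to get $\sum_d (z_d - s_d)_+$, evaluate the logistic expectation as $\log(1+e^{-s_d})$, and minimize over $s$. The only cosmetic differences are that you fix $s$ upfront and minimize at the end where the paper invokes convex duality and then swaps expectation with minimization (yielding the same inequality), and you compute the one-dimensional expectation via the tail-integral formula rather than integrating against the density, both of which are fine.
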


\begin{proof}
The first inequality was shown by~\cite{Hazan12}. For the second inequality, we have:
\BEAS
 A_{\rm Logistic}(f) & = & \mathbb{E}_{z} \big[\max\limits_{y \in \{0,1\}^D}{z^\top y - f(y)} \big] \\[-.1cm]
 & = & \mathbb{E}_{z} \big[\max\limits_{y \in \{0,1\}^D} {z^\top y - \min_{ s \in B(f) } s^\top y} \big] 
 \mbox{ from properties of the base polytope } B(f), \\
& = & \mathbb{E}_{z} \big[ \max\limits_{y \in \{0,1\}^D} \min_{ s \in B(f) }{z^\top y - s^\top y} \big],\\
& = & \mathbb{E}_{z} \big[ \min_{ s \in B(f) }  \max\limits_{y \in \{0,1\}^D}{z^\top y - s^\top y} \big]
 \mbox{ by convex duality,}\\
& \leqslant &  \min_{ s \in B(f) }  \mathbb{E}_{z} \big[ \max\limits_{y \in \{0,1\}^D}{(z - s)^\top y} \big]
 \mbox{ by swapping expectation and minimization,}\\
& =  &  \min_{ s \in B(f) } \textstyle \mathbb{E}_{z} \big[ \sum_{d=1}^D (z_d - s_d)_+ \big]  \mbox{ by explicit maximization,} \\
& =  & \min_{ s \in B(f) } \textstyle \big[ \sum_{d=1}^D \mathbb{E}_{z_d}  (z_d - s_d)_+ \big] \mbox{ by using linearity of expectation,}\\
& = & \min_{ s \in B(f) } \textstyle \big[ \sum_{d=1}^D \int_{-\infty}^{+\infty}{(z_d - s_d)_+ P(z_d)dz_d}\big] \mbox{ by definition of expectation,}\\
& = & \min_{ s \in B(f) } \textstyle \big[ \sum_{d=1}^D \int_{s_d}^{+\infty}{(z_d-s_d)\frac{e^{-z_d}}{(1+e^{-z_d})^2}dz_d} \big]   \mbox{ by substituting the density function,}\\
& = & \min_{ s \in B(f) }  \textstyle \sum_{d=1}^D \log( 1 + e^{-s_d})  \mbox{ which leads to the desired result.}
\EEAS
\end{proof}

In the inequality above, since the logistic distribution has full support, there cannot be equality. However, if the base polytope is such that, with high probability $\forall d, |s_d| \geq |z_d|$, then the two bounds are close. Since the logistic distribution is concentrated around zero, we have equality when $|s_d|$ is large for all $d$ and $s \in B(f)$.

\paragraph{Theoretical complexity of $A_{\rm L\mbox{-}field }$ and $A_{\rm logistic}$.} The logistic bound $A_{\rm logistic}$ can be computed if there is efficient MAP-solver for submodular functions (plus a modular term). In this case, the divide-and-conquer algorithm can be applied for L-Field~\cite{Djolonga14}. Thus, the complexity is dedicated to the minimization of  $O(|V|)$ problems. Meanwhile, for the method based on logistic samples, it is necessary to solve $M$ optimization problems. In our empirical bound comparison (next paragraph), running time was the same for both methods. Note however that for parameter learning, we need a \emph{single} SFM problem per gradient iteration (and not $M$).

\paragraph{Empirical comparison of $A_{\rm L\mbox{-}field }$ and $A_{\rm logistic}$.}
We compare the upper-bounds on the log-partition function $A_{\rm L\mbox{-}field }$ and $A_{\rm logistic}$, with the setup used
by~\cite{Djolonga14}.
We thus consider data from a Gaussian mixture model with 2 clusters in $\mathbb{R}^2$. 
The centers are sampled from $\mathcal{N}([3, 3],I)$ and $\mathcal{N}([-3, -3],I)$, respectively.
Then we sampled $n = 50$ points for each cluster.
Further, these $2n$ points are used as nodes in a complete weighted graph, where the weight between points $x$ and $y$ is equal to $e^{-c||x - y||}$.

We consider the  graph cut function associated to this weighted graph, which defines a log-supermodular distribution.
We then consider conditional distributions, one for each $k = 1, \dots, n$, on the events that at least $k$ points from the first cluster lie on the one side of the cut and at least $k$ points from the second cluster lie on the other side of the cut.
For each conditional distribution, we evaluate and compare the two upper bounds.

In Figure \ref{fig:cuts}, we show various bounds on $A(f)$ as a function of the number on conditioned pairs.
The logistic upper bound is obtained using 100 logistic samples: the logistic upper-bound $A_{\rm logistic}$ is close to the superdifferential lower bound from~\cite{Djolonga14} and is indeed significantly lower than the bound~$A_{\rm L\mbox{-}field }$.

\begin{figure}[h]
\centering
\begin{subfigure}{.45\linewidth}
\centering
    \includegraphics[width=6cm]{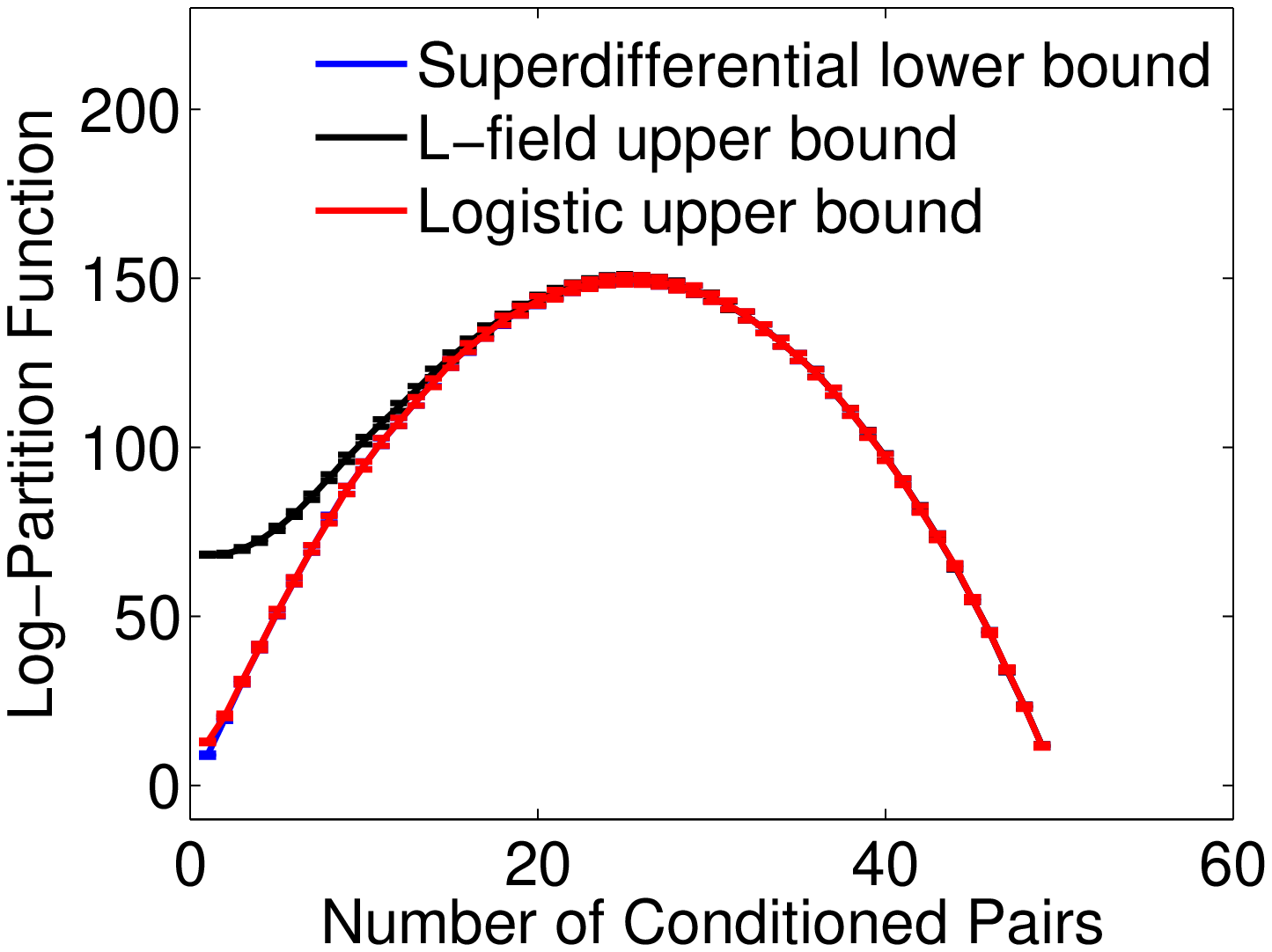}
    \caption{Mean bounds with confidence intervals, c = 1}
    \label{fig:svss_c10}
\end{subfigure}
\hfill
\begin{subfigure}{.45\linewidth}
\centering
    \includegraphics[width=6cm]{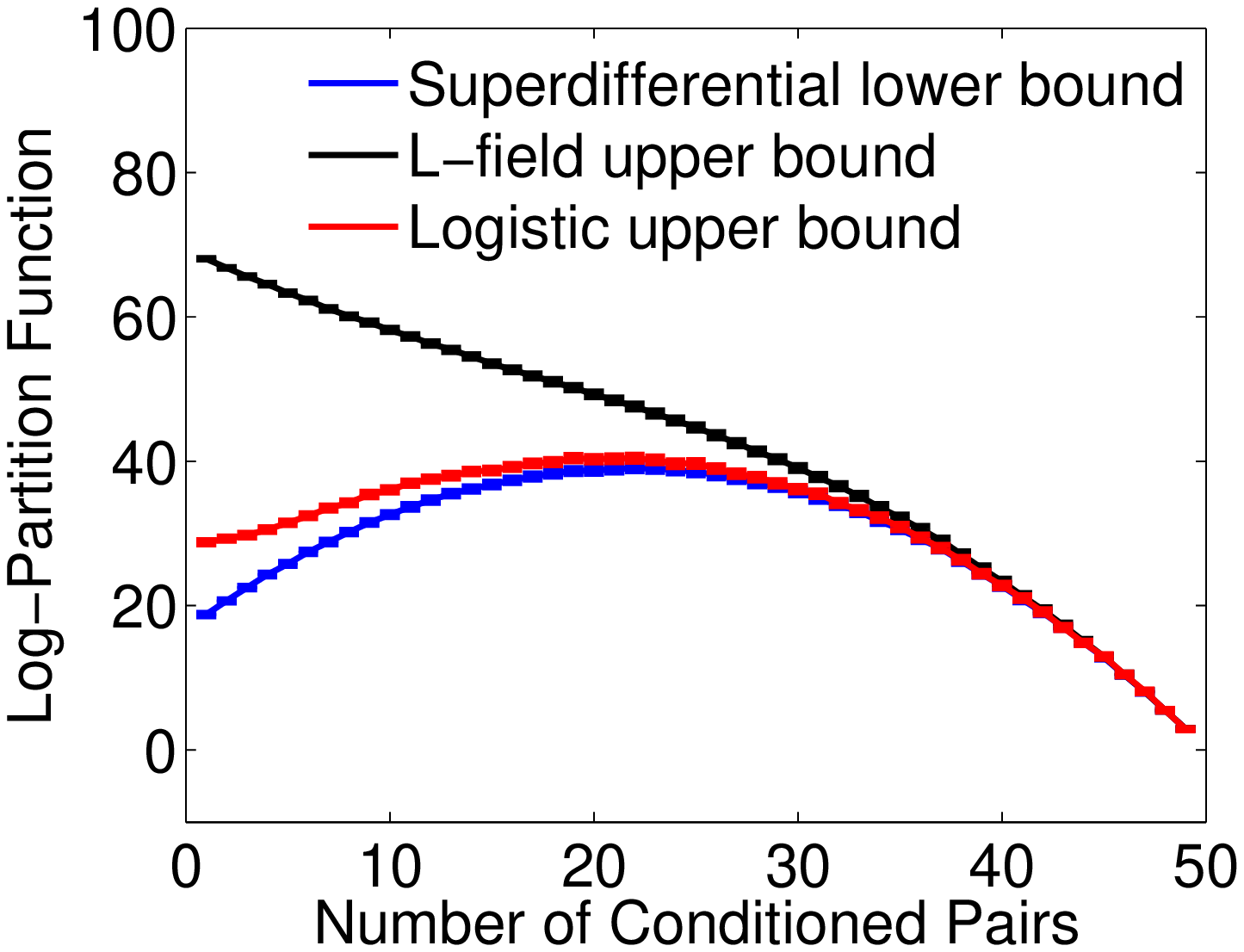}
    \caption{Mean bounds with confidence intervals, c = 3}
    \label{fig:svss_1000}
\end{subfigure}
\caption{Comparison of log-partition function bounds for different values of $c$. See text for details.}
\label{fig:cuts}
\end{figure}

\subsection{From bounds to approximate inference}

Since linear functions are submodular functions, given any convex upper-bound on the log-partition function, we may derive an approximate marginal probability for each $x_d \in \{0,1\}$. Indeed, following~\cite{Hazan12}, we consider an exponential family model $p(x|t) = \exp(-f(x) + s^\top x -   A(f-t))$, where $f-t$ is the function $x \mapsto f(x) - t^\top x$. When $f$ is assumed to be fixed, this can be seen as an exponential family with the base measure  $\exp(-f(x))$, sufficient statistics $x$, and $A(f-t)$ is the log-partition function.
It is known that  the expectation of the sufficient statistics under the exponential family model $\mathbb{E}_{p(x|t)} x$ is the gradient of the log-partition function~\cite{Wainw08}. Hence, any approximation of this log-partition gives an approximation of this expectation, which in our situation is the vector of marginal probabilities that an element is equal to $1$.

For the L-field bound, at $t=0$, we have $\partial_{t_d}  A_{\rm L\mbox{-}field }(f-t) = \sigma(s_d^\ast)$, where $s^\ast$ is the minimizer of $\sum_{d=1}^D \log (1 + e^{-s_d})$, thus recovering the interpretation of~\cite{Djolonga15} from another point of view. 

For the logistic bound,  this is the inference mechanism from~ \cite{Hazan12}, with $\partial_{t_d}  A_{\rm logistic}(f-t) = \mathbb{E} _z y^\ast(z)$, where $y^\ast(z)$ is the maximizer of 
$\max_{y \in \{0,1\}^D}{z^\top y - f(y)} $. 
In practice, in order to perform approximate inference, we only sample $M$ logistic variables. We could do the same for parameter learning, but a much more efficient alternative, based on mixing sampling and convex optimization, is presented in the next section.

\section{Parameter learning through maximum likelihood}
\label{sec:Learning problem}
\label{sec:ml}

An advantage of log-supermodular probabilistic models is the opportunity to learn the model parameters from data using the maximum-likelihood principle. In this section, we consider that we are given $N$ observations 
 $x_1, \dots, x_N \in \{0,1\}^D$, e.g., binary images such as shown in Figure~\ref{fig:clearnoisepic}.
 
 We consider a  submodular function $f(x)$ represented as $f(x) = \sum_{k=1}^K \alpha_k f_k(x) - t^\top x$. The modular term $t^\top x$ is explicitly taken into account with $t \in \rb^D$, and $K$ base submodular functions are assumed to be given with $\alpha \in \rb_+^K$ so that the function $f$ remains submodular. Assuming the data $x_1,\dots,x_N$ are independent and identically (i.i.d.) distributed, then maximum likelihood is equivalent to minimizing:
 \[
 \min_{ \alpha \in \rb_+^K, \ t \in \rb^D} - \frac{1}{N} \sum_{n=1}^N \log p(x_n | \alpha,t)
 =  \min_{ \alpha \in \rb_+^K, \ t \in \rb^D} \frac{1}{N} \sum_{n=1}^N  \big\{
 \sum_{k=1}^K \alpha_k f_k(x_n) - t^\top x_n + A(f) \big\},
 \]
 which takes the particularly simple form
 \BEQ
 \label{eq:ML}
  \min_{ \alpha \in \rb_+^K, \ t \in \rb^D}  \sum_{k=1}^K \alpha_k \Big( \frac{1}{N} \sum_{n=1}^N f_k(x_n)   \Big) 
   - t^\top \Big( \frac{1}{N} \sum_{n=1}^N x_n\Big)  + A(\alpha, t),
 \EEQ
 where we use the notation $A(\alpha,t) = A(f)$.
 We now consider replacing the intractable log-partition function by its approximations defined in \mysec{var}.
 
 \subsection{Learning with the L-field approximation}
 \label{sec:lfield}

In this section, we show that if we replace $A(f)$ by $A_{\rm L\mbox{-}field }(f)$, we obtain a degenerate solution.
 Indeed, we have
 \BEAS
 A_{\rm L\mbox{-}field }(\alpha,t)
& = &  \min_{s \in B(f) } \sum_{d=1}^D {\log{(1+e^{-s_d})}} 
=\min_{s \in B(\sum_{k=1}^K \alpha_K f_K) } \sum_{d=1}^D {\log{(1+e^{-s_d + t_d})}} .
 \EEAS
 This implies that \eq{ML} becomes
 \[
   \min_{ \alpha \in \rb_+^K, \ t \in \rb^D}  \min_{s \in B(\sum_{k=1}^K \alpha_K f_K) }   \sum_{k=1}^K \alpha_k \Big( \frac{1}{N} \sum_{n=1}^N f_k(x_n)   \Big) 
   - t^\top \Big( \frac{1}{N} \sum_{n=1}^N x_n\Big)  +  \sum_{d=1}^D {\log{(1+e^{-s_d + t_d})}}. 
 \]
 The minimum with respect to $t_d$ may be performed in closed form with $t_d - s_d = \log \frac{\langle x \rangle_d}{1 - \langle x \rangle_d}$, where $\langle x \rangle = \frac{1}{N} \sum_{n=1}^N x_n$. Putting this back into the equation above, we get the equivalent problem:
  \[
   \min_{ \alpha \in \rb_+^K }  \min_{s \in B(\sum_{k=1}^K \alpha_K f_K) }   \sum_{k=1}^K \alpha_k \Big( \frac{1}{N} \sum_{n=1}^N f_k(x_n)   \Big) 
   - s^\top \Big( \frac{1}{N} \sum_{n=1}^N x_n\Big)  +  \mbox{ const },
   \]
   which is equivalent to, using the representation of $f$ as the support function of $B(f)$ for any submodular function:
   \[
    \min_{ \alpha \in \rb_+^K }  \sum_{k=1}^K \alpha_k \bigg[ 
       \frac{1}{N} \sum_{n=1}^N f_k(x_n)   - f_k \big( \frac{1}{N} \sum_{n=1}^N x_n\big)  \bigg]. \]
       Since $f_k$ is convex, by Jensen's inequality, the linear term in $\alpha_k$ is non-negative; thus
   maximum likelihood through L-field will lead to a degenerate solution where all $\alpha$'s are equal to zero.

  \subsection{Learning with the logistic approximation with stochastic gradients}
  \label{sec:sgd}

In this section we consider the problem~(\ref{eq:ML}) and replace $A(f)$ by $A_{\rm Logistic}(f)$:
  \BEQ
 \label{eq:MLgumbeL}
  \!\!\!\min_{ \alpha \in \rb_+^K, \ t \in \rb^D}  \sum_{k=1}^K \alpha_k \langle f_k(x) \rangle_{\rm emp.}
   - t^\top \langle x \rangle_{\rm emp.}
+ 
\mathbb{E}_{z \sim \rm logistic} \big[\max\limits_{y \in \{0,1\}^D}{z^\top y + t^\top y - \sum_{k=1}^K \alpha_k f(y)} \big],
 \EEQ
 where $\langle M(x) \rangle_{\rm emp.}$ denotes the empirical average of $M(x)$ (over the data). 
 
Denoting by $y^\ast(z,t,\alpha)
 \in \{0,1\}^D$ the maximizers of $z^\top y + t^\top y - \sum_{k=1}^K \alpha_k f(y) $, the objective function may be written:
 \[\!\!\!\!\!
  \sum_{k=1}^K \alpha_k \big[ \langle f_k(x) \rangle_{\rm emp.} -  \langle f_k(y^\ast(z,t,\alpha)) \rangle_{\rm logistic} \big]
  - t^\top \big[ \langle x \rangle_{\rm emp.} - \langle y^\ast(z,t,\alpha) \rangle_{\rm logistic}  ] 
  + \langle z^\top y^\ast(z,t,\alpha) \rangle_{\rm logistic}.
 \]
 This implies that at optimum, for $\alpha_k> 0$, then $ \langle f_k(x) \rangle_{\rm emp.} =  \langle f_k(y^\ast(z,t,\alpha)) \rangle_{\rm logistic} $, while, $ \langle x \rangle_{\rm emp.} = \langle y^\ast(z,t,\alpha) \rangle_{\rm logistic} $, the expected values of the sufficient statistics match between the data and the optimizers used for the logistic approximation~\cite{Hazan12}.
  
In order to minimize the expectation in  \eq{MLgumbeL}, we propose to use the projected stochastic gradient method, not on the data as usually done, but on our own internal randomization. The algorithm then becomes, once we add weighted $\ell_2$-regularization $\Omega(t,\alpha)$:
 \BIT
 \item \textbf{Input}: functions $f_k$, $k=1,\dots,K$, and expected sufficient statistics $\langle f_k(x) \rangle_{\rm emp.} \in \rb$ and $\langle x \rangle_{\rm emp.} \in [0,1]^D$, regularizer $\Omega(t,\alpha)$.
 \item \textbf{Initialization}: $\alpha = 0, t = 0$
 \item \textbf{Iterations}: for $h$ from $1$ to $H$
 \BIT
 \item Sample $z \in \rb^D$ as independent logistics
 \item Compute $y^\ast = y^\ast(z,t,\alpha) \in \arg \max\limits_{y \in \{0,1\}^D}{z^\top y + t^\top y - \sum_{k=1}^K \alpha_k f(y)}$
 \item Replace $t$ by $t - \frac{C}{\sqrt{h}} \big[ y^\ast - \langle x \rangle_{\rm emp.}  + \partial_t \Omega(t,\alpha) \big]$
 \item Replace $\alpha_d$ by $\Big(\alpha_d - \frac{C}{\sqrt{h}} \big[ \langle f_k(x) \rangle_{\rm emp.} -f_k(y^\ast) 
 + \partial_{\alpha_d} \Omega(t,\alpha)  \big]\Big)_+$.
 
  \EIT
  \item \textbf{Output}: $(\alpha,t)$.
 \EIT
 Since our cost function is convex and Lipschitz-continuous, the averaged iterates are converging to the global optimum~\cite{nemirovski2009robust} at rate $1/\sqrt{H}$ (for function values).

\subsection{Extension to conditional maximum likelihood}
\label{sec:cond}

In our experiments in \mysec{Experiments}, we consider a joint model over two binary vectors $x,z \in \rb^D$, as follows
\BEQ
\label{eq:noisy}
p(x,z|\alpha,t,\pi) 
= p(x|\alpha,t)p(z|x,\pi)  = \exp( -f(x) - A(f) )  \prod_{d=1}^D \pi_d^{\delta(z_d \neq x_d)} 
(1-\pi_d)^{\delta(z_d = x_d)} ,
\EEQ
which corresponds to sampling $x$ from a log-supermodular model and considering $z$ that switches the values of $x$ with probability $\pi_d$ for each $d$, that is, a noisy observation of $x$. 
We have:
\BEAS
\log p(x,z|\alpha,t,\pi)  &  = & -f(x) -A(f) + \textstyle \sum_{d=1}^D \big\{ - \log(1+e^{u_d} ) + x_d u_d + z_d u_d - 2x_d z_d u_d \big\},
\EEAS
with $u_d = \log \frac{\pi_d}{1-\pi_d}$ which is equivalent to $\pi_d = ( 1  + e^{-u_d})^{-1}$.

Using Bayes rule, we have $
p(x|z,\alpha,t,\pi) \propto \exp( -f(x) -A(f) +  x^\top u - 2 x^\top(u \circ z)  )$, 
which leads to a log-supermodular model of the form $p(x|z,\alpha,t,\pi) = \exp( -f(x) + x^\top ( u  - 2 u \circ z )   - A(f -  u  +  2 u \circ z ) )$.

Thus, if we observe both $z$ and $x$, we can consider a conditional maximization of the log-likelihood (still a convex optimization problem), which we do in our experiments for supervised image denoising, where we assume we know both noisy and original images at training time. 
Stochastic gradient on the logistic samples can then be used. Note that our conditional ML estimation can be seen as a form of approximate conditional random fields~\cite{lafferty2001conditional}.

While  supervised learning can be achieved by other techniques such as structured-output-SVMs~\cite{szummer2008learning, Taskar2003max, Tsochantaridis2005large}, our probabilistic approach also applies when we do not observe the original image, which we now consider.

\subsection{Missing data through maximum likelihood}
\label{sec:missing}

In the model in \eq{noisy}, we now assume we only observed the noisy output $z$, and we want to perform parameter learning for $\alpha,t,\pi$. This is a latent variable model for which ML can  be readily applied. We have:
\BEAS
\log p(z|\alpha,t,\pi)
& = & \textstyle \log \sum_{ x\in \{0,1\} } p(z,x|\alpha,t,\pi)
 \\[-.01cm]
 &= &  \textstyle \log \sum_{ x\in \{0,1\}^D }  \exp( -f(x) - A(f) )   \prod_{d=1}^D \pi_d^{\delta(z_d \neq x_d)} 
(1-\pi_d)^{\delta(z_d = x_d)} \\[-.01cm]
& = & A(f -   u  + 2 u \circ z) - A(f) + z^\top u - \textstyle \sum_{d=1}^D \log (1+e^{u_d}).
\EEAS
In practice, we will assume that the noise probability $\pi$ (and hence $u$) is uniform across all elements. While we could use majorization-minization approaches such as the expectation-minimization algorithm (EM), we consider instead stochastic subgradient descent to learn the model parameters $\alpha, t$ and $u$ (now a non-convex optimization problem, for which we still observed good convergence).

\section{Experiments}
\label{sec:Experiments}
\label{sec:exp}

The aim of our experiments is to demonstrate the ability of our approach to removing  noise  in binary  images, following the experimental set-up of~\cite{Hazan12}. 
We consider the training sample of $N_{train} = 100$  images of size $D = 50 \times 50$, and the test sample of $N_{test} = 100$ binary images, containing a horse silhouette from the Weizmann horse database \cite{Borenstein04}.
At first we add some noise by flipping pixels values independently with  probability $\pi$.
In Figure \ref{fig:clearnoisepic}, we provide an example from the test sample: the original, the noisy and the denoised image (by our algorithm).

We consider the model from \mysec{cond}, with the  two functions $f_1(x)$, $f_2(x)$ which are horizontal and vertical cut functions {with binary weights} respectively, together with a modular term of dimension $D$.
To perform minimization we use graph-cuts~\cite{Boykov01} as we deal with positive or attractive potentials. 

\begin{figure}

\centering
\begin{subfigure}{.3\linewidth}
\centering
    \includegraphics[width=2.57cm]{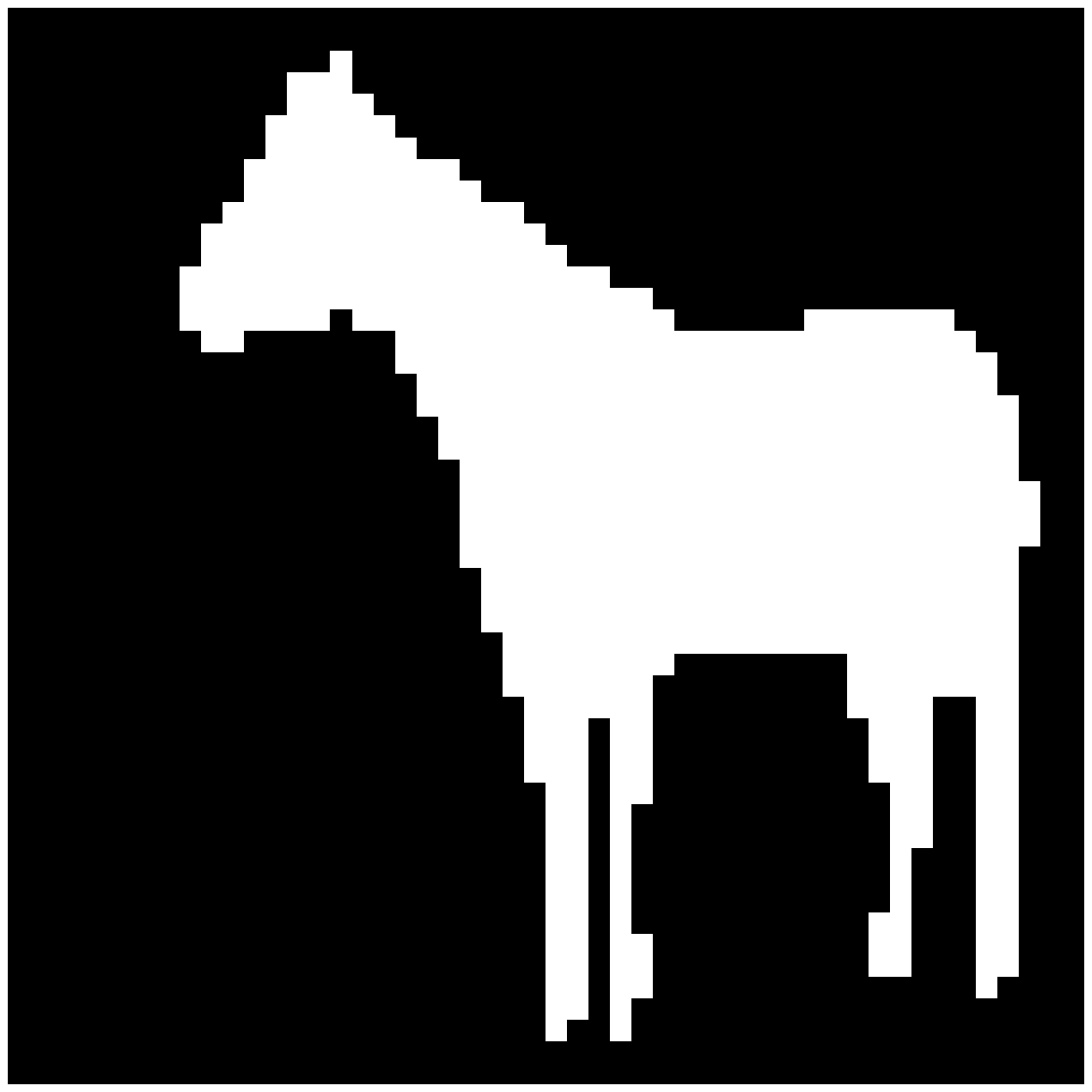}
    \caption{original image}
    \label{fig:original}
\end{subfigure}
\hfill
\begin{subfigure}{.3\linewidth}
\centering
    \includegraphics[width=2.57cm]{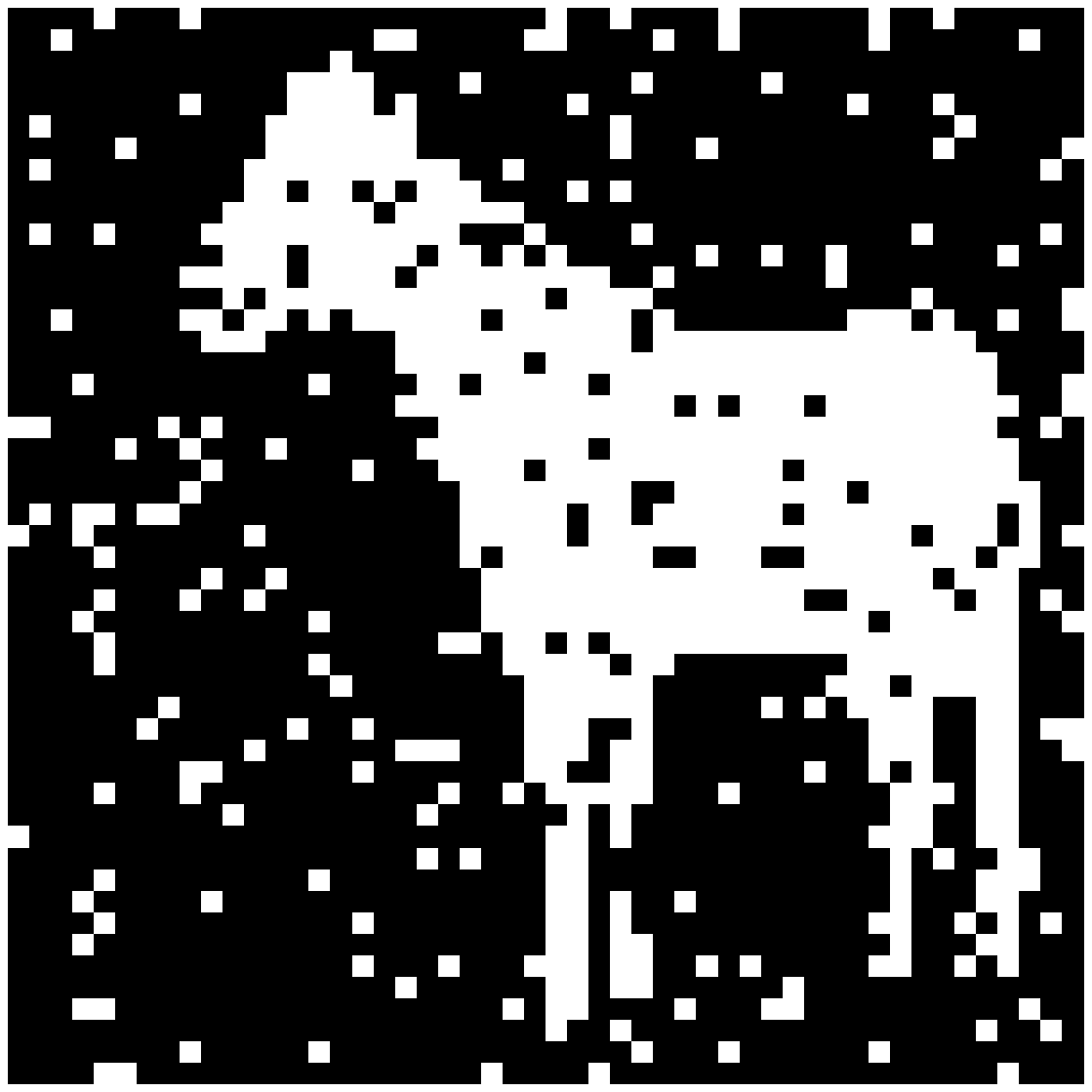}
    \caption{noisy image}
    \label{fig:noised}
\end{subfigure}
\hfill
\begin{subfigure}{.3\linewidth}
\centering
    \includegraphics[width=2.57cm]{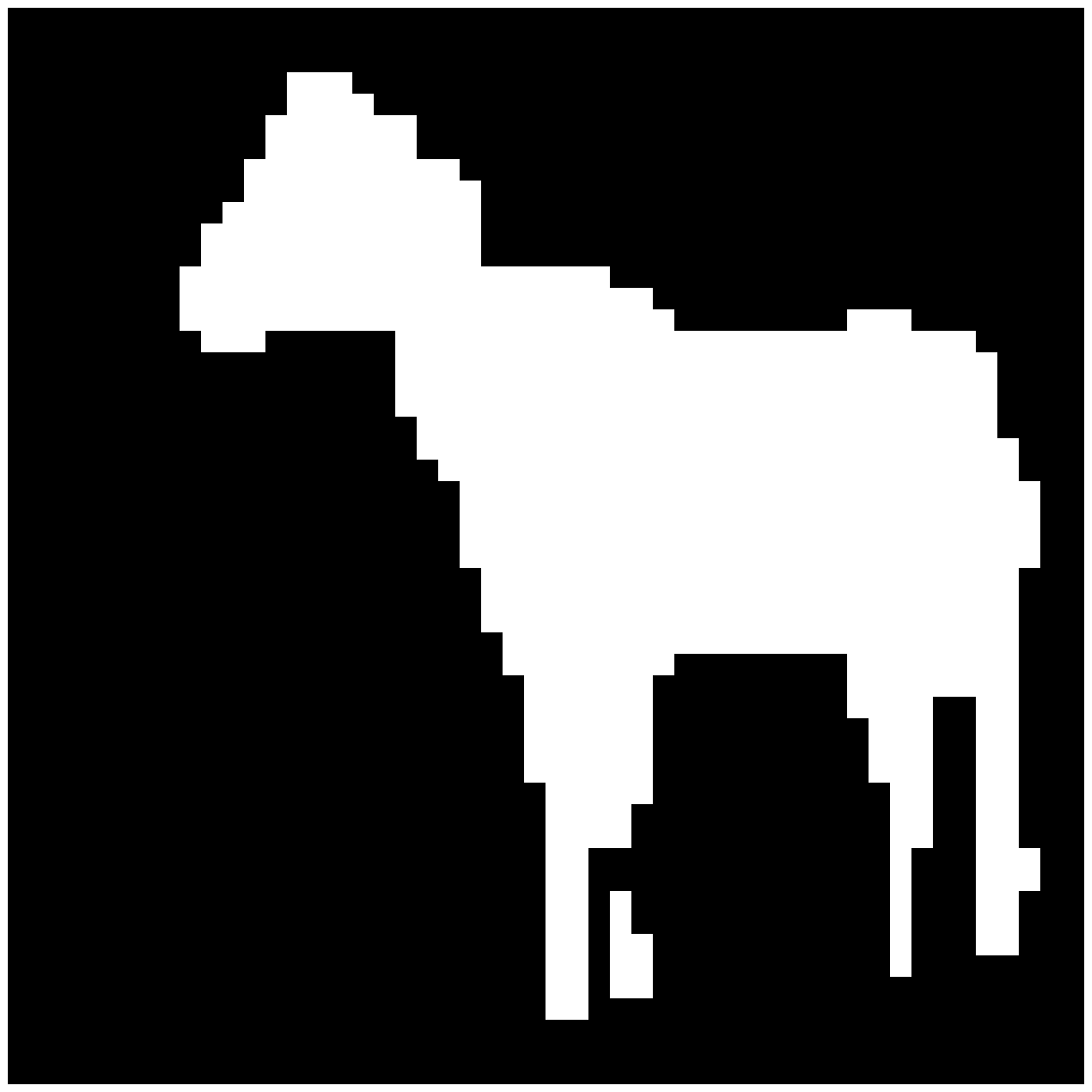}
    \caption{denoised image}
    \label{fig:denoised}
\end{subfigure}
\caption{Denoising of a horse image from the Weizmann horse database~\cite{Borenstein04}.}
\label{fig:clearnoisepic}
\end{figure}

\subsection{Supervised image denoising}

We assume that we observe $N = 100$ pairs $(x_i,z_i)$ of original-noisy images, $i=1,\dots,N$.
We perform parameter inference by maximum likelihood using stochastic subgradient descent (over the logistic samples), with regularization by the squared $\ell_2$-norm, one parameter for $t$, one for $\alpha$, both learned by cross-validation. Given our estimates, we may denoise a new image by computing the ``max-marginal'', e.g., the maximum a posteriori $\max_x p(x|z,\alpha,t)$ through a single graph-cut, or computing ``mean-marginals'' with 100 logistic samples. 
To calculate the error we use the normalized Hamming distance and 100 test images.

Results are presented in Table~\ref{table:sup}, where we compare the two types of decoding, as well as a structured output SVM (SVM-Struct~\cite{Tsochantaridis2005large}) applied to the same problem. Results are reported in proportion of correct pixels. We see that the probabilistic models here outperform the max-margin formulation and that using mean-marginals (which is optimal given our loss measure) lead to slightly better performance.

\begin{table}
\begin{center}
    \begin{tabular}{| c | c c | c  c | c   c |}
    \hline
    noise $\pi$  & max-marg. &  { std} & mean-marginals &  {std} &  SVM-Struct &  {std} \\ \hline
    1\%  & 0.4\% & <0.1\% & 0.4\% & <0.1\% & 0.6\% & <0.1\% \\ \hline
    5\%  & 1.1\% & <0.1\% & 1.1\% & <0.1\% & 1.5\% & <0.1\% \\ \hline
    10\% & 2.1\% & <0.1\% & 2.0\% & <0.1\% & 2.8\% & 0.3\%  \\ \hline
    20\% & 4.2\% & <0.1\% & 4.1\% & <0.1\% & 6.0\% & 0.6\%  \\ \hline
    \end{tabular}
\end{center}
\caption{Supervised denoising results.} \label{table:sup}

\end{table}

\begin{table}
\begin{center}
    \begin{tabular}{|c | c   c | c  c| c   c | c   c |}
    \hline
          & \multicolumn{4}{|c|}{$\pi$ is fixed} & \multicolumn{4}{|c|}{$\pi$ is not fixed}  \\ \hline
    $\pi$ & max-marg. & std & mean-marg.  & std & max-marg. & std & mean-marg. &  {std}  \\ \hline
    1\%  & 0.5\% & <0.1\% & 0.5\% & <0.1\% & 1.0\% & - & 1.0\% & - \\ \hline
    5\%  & 0.9\% & 0.1\%  & 1.0\% & 0.1\%  & 3.5\% & 0.9\% & 3.6\% & 0.8\% \\ \hline
    10\% & 1.9\% & 0.4\%  & 2.1\% & 0.4\%  & 6.8\% & 2.2\% & 7.0\% & 2.0\% \\ \hline
    20\% & 5.3\% & 2.0\%  & 6.0\% & 2.0\%  & 20.0\%& - & 20.0\% & - \\ \hline
    \end{tabular}
\end{center}
\caption{Unsupervised denoising results.} \label{table:unsup}

\end{table}

\subsection{Unsupervised image denoising}

We now only  consider $N= 100$ noisy images $z_1, \dots, z_N$ to learn the model, without the original images, and we use the latent model from \mysec{missing}.  We apply stochastic subgradient descent for the difference of the two convex functions $A_{\rm logistic}$ to learn the model parameters and use fixed regularization parameters equal to $10^{-2}$.

We consider two situations, with a known noise-level $\pi$ or with learning it together with $\alpha$ and $t$.
The error was calculated using either max-marginals and mean-marginals. Note that here, structured-output SVMs cannot be used because there is no supervision. Results are reported in Table~\ref{table:unsup}. 
 One explanation for a better performance for max-marginals in this case is that the unsupervised approach tends to oversmooth the outcome and max-marginals correct this a bit.

When the noise level is known, the performance compared to supervised learning is not degraded much, showing the ability of the probabilistic models to perform parameter estimation with missing data. When the noise level is unknown and learned as well, results are   worse, still better than a trivial answer for moderate levels of noise (5\% and 10\%) but not better than outputting the noisy image for extreme levels (1\% and 20\%).
 {In challenging fully unsupervised case the standard deviation is up to 2.2\% (which shows that our results are statistically significant).}

\section{Conclusion}

In this paper, we have presented how approximate inference based on stochastic gradient and  ``perturb-and-MAP'' ideas could be used to learn parameters of log-supermodular models, allowing us to benefit from the versatility of probabilistic modelling, in particular in terms of parameter estimation with missing data. While our experiments have focused on simple binary image denoising, exploring larger-scale applications in computer vision (such as done by \cite{tschiatschek16diversity,zhang15multiclass}) should also show the benefits of mixing probabilistic modelling and submodular functions.

\subsection*{Acknowledgements}
This work was funded by the MacSeNet Innovative Training Network. We would like to thank Sesh Kumar, Anastasia Podosinnikova and Anton Osokin for interesting discussions related to this work.

\bibliographystyle{plain}
\bibliography{bib}
\end{document}